\documentclass[letterpaper]{article}
\setlength{\pdfpagewidth}{8.5in} \setlength{\pdfpageheight}{11in}
\usepackage{aaai}
\usepackage{times} 
\usepackage{helvet} 
\usepackage{courier} 
\setlength{\pdfpagewidth}{8.5in} 
\setlength{\pdfpageheight}{11in}
\usepackage{graphicx}
\usepackage{subfig}
\usepackage{url}
\usepackage{amssymb}
\usepackage{comment}
\usepackage{color}
\usepackage{xspace}
\usepackage{framed}

\newcommand{\argmax}{\text{argmax}}
\newcommand{\citet}[1]{\citeauthor{#1} (\citeyear{#1})}


\newcommand{\AlgRG}{{\textsc{Stochastic-Greedy}}\xspace}
\newcommand{\AlgMG}{{\textsc{Multi-Greedy}}\xspace}
\newcommand{\AlgSG}{{\textsc{Sample-Greedy}}\xspace}
\newcommand{\AlgRS}{{\textsc{Random-Selection}}\xspace}
\newcommand{\AlgTG}{{\textsc{Threshold-Greedy}}\xspace}
\newcommand{\AlgFG}{{\textsc{Threshold-Greedy}}\xspace}
\newcommand{\AlgLG}{\textsc{Lazy-Greedy}\xspace}
\newcommand{\kr}[2]{\mathcal{K}_{e_{#1},e_{#2}}}

\usepackage{amsmath}
\usepackage{algorithm}
\usepackage{amsthm}
\usepackage[noend]{algorithmic}

\newcommand{\E}{\mbox{\bf E}}

\newcommand{\INPUT}{\item[{\bf Input:}]}
\newcommand{\OUTPUT}{\item[{\bf Output:}]}
\newcommand{\RR}{{\mathbb R}}

\newtheorem{theorem}{Theorem}
\newtheorem{lemma}[theorem]{Lemma}


\pdfinfo{
/Title (Lazier Than Lazy Greedy)
/Author (Baharan Mirzasoleiman, Ashwinkumar Badanidiyuru, Amin Karbasi, Jan Vondrak, Andreas Krause)
/Keywords (submodular functions, approximation algorithms, greedy algorithms)
}
%
\title{Lazier Than Lazy Greedy}
%
\author{
Baharan Mirzasoleiman \\ ETH Zurich \\ baharanm@inf.ethz.ch \\
\And Ashwinkumar Badanidiyuru \\ Google Research Mountain View \\ ashwinkumarbv@google.com \\
\And Amin Karbasi \\ Yale University \\ amin.karbasi@yale.edu \\
\AND
\And Jan Vondr{\'{a}}k \\ IBM Almaden \\ jvondrak@us.ibm.com \\
\And Andreas Krause \\ ETH Zurich \\ krausea@ethz.ch \\
\And
}

\begin{document}
\maketitle

\begin{abstract}
Is it possible to maximize a monotone submodular function faster than 
the widely used  lazy greedy algorithm (also known as accelerated greedy), both in theory and  practice? 
In this paper, we develop  the first linear-time algorithm for maximizing a general monotone submodular function subject to a cardinality constraint. We  show that our randomized algorithm, \AlgRG, can  achieve a $(1-1/e-\varepsilon)$ approximation guarantee, in expectation, to the optimum solution in time \textit{linear} in the size of the data and \textit{independent} of the cardinality constraint. 
 We empirically demonstrate the effectiveness of our algorithm on submodular functions arising in data summarization, including training large-scale kernel methods, exemplar-based clustering, and sensor placement. We observe that  \AlgRG  practically  achieves the same  utility value as  lazy greedy but runs much faster. More surprisingly, we observe that in many practical scenarios \AlgRG  does not  evaluate the whole fraction of data points even once and still achieves  indistinguishable results compared to lazy greedy. 
\end{abstract}

\section{Introduction}\label{sec:intro}

For the last several years, we have witnessed the emergence of datasets  of an unprecedented scale across  different scientific disciplines. The large volume of such datasets presents new computational challenges as the diverse, feature-rich, unstructured and usually high-resolution data does not allow for effective data-intensive inference. In this regard, \textit{data summarization} is a compelling  (and sometimes the only) approach that aims at both exploiting the richness of large-scale data and being computationally tractable. Instead of operating on complex and large  data directly, carefully constructed summaries not only enable the execution of various data analytics tasks but also  improve their efficiency and scalability.

In order to effectively summarize the data, we need to define a measure for the amount of \textit{representativeness} that lies within a selected set. If we think of representative elements as the ones that cover best, or are most informative w.r.t.~the items in a dataset then naturally adding a new element to a  set of representatives, say $A$, is more beneficial than adding it to its superset, say $B\supseteq A$, as the new element can potentially enclose more uncovered items when considered with elements in $A$ rather than $B$. This intuitive \textit{diminishing returns} property  can be systematically formalized through \textit{submodularity} (c.f., \citet{nemhauser78}). More precisely, a submodular function $f: 2^V\rightarrow \mathbb{R}$ assigns a subset $A \subseteq V$ a utility value $f(A)$ --measuring the representativeness of the set $A$-- such that
%
 $$f(A\cup\{i\}) - f(A) \geq f(B\cup\{i\}) - f(B)$$
for any $A\subseteq B\subseteq V$ and  $i \in V \setminus B$. Note that $\Delta(i|A) \doteq f(A\cup\{i\}) - f(A)$ measures the marginal \textit{gain} of adding a new element $i$ to  a summary $A$. Of course, the meaning of representativeness (or utility value)  depends very much on the underlying application; for a collection of random variables, the utility of a subset can be measured in terms of entropy, and for a collection of vectors, the utility of a subset can be measured in terms of the dimension of a subspace spanned by them. In fact, summarization through submodular functions has gained a lot of interest in recent years with application ranging from  exemplar-based clustering \cite{gomes10budgeted}, 
to document {\cite{lin11class,DasguptaKR13} and corpus summarization \cite{sipos12temporal}, to recommender systems \cite{leskovec07,elarini09turning,elarini11beyond}.

Since we would like to choose a summary of a manageable size, a natural optimization problem is to find a summary $A^*$ of size at most $k$ that maximizes the utility, i.e., 
\begin{equation}\label{problem}
A^* = \argmax_{A: |A|\leq k} f(A). 
\end{equation}

Unfortunately,  this optimization problem is NP-hard for many classes of submodular functions \cite{nemhauser78best,feige98threshold}. We say a submodular function is {\em monotone}  if for any $A\subseteq B\subseteq V$ we have $f(A)\leq f(B)$. A celebrated result   of \citet{nemhauser78} --with great importance in artificial intelligence and machine learning-- states that for non-negative monotone submodular functions a simple \textit{greedy algorithm} provides a solution with $(1-1/e)$ approximation guarantee  to the  optimal (intractable) solution. This greedy algorithm starts with the empty set $A_0$ and in iteration $i$, adds an element maximizing the marginal gain $\Delta(e|A_{i-1})$. For a ground set $V$ of size $n$, this greedy algorithm needs $O(n\cdot k)$ function evaluations in order to find a summarization of size $k$. However, in many data intensive applications, evaluating $f$ is expensive and  running the standard greedy algorithm is infeasible. Fortunately, submodularity can be exploited to implement an accelerated version of the  classical greedy algorithm, usually called \AlgLG \cite{minoux78accelerated}. Instead of computing $\Delta(e|A_{i-1})$ for each element $e\in V$, the \AlgLG algorithm keeps an upper bound $\rho(e)$ (initially $\infty$) on the marginal gain sorted in decreasing order. In each iteration $i$, the \AlgLG algorithm evaluates the element on top of the list, say $e$, and updates its upper bound, $\rho(e)\leftarrow \Delta(e|A_{i-1})$. If after the update $\rho(e)\geq \rho(e')$ for all $e'\neq e$, submodularity guarantees that $e$ is the element with the largest marginal gain. Even though the exact cost (i.e., number of function evaluations) of \AlgLG is unknown, this algorithm leads to orders of magnitude  speedups in practice. As a result, it has been used as the state-of-the-art implementation in numerous applications including  network monitoring \cite{leskovec07}, network inference \cite{gomezrodriguez12inferring},  document summarization \cite{lin11class}, and speech data subset selection \cite{wei2013using}, to name a few. However, as the size of the data increases, even for small  values of $k$, running \AlgLG is infeasible. A natural question to ask is whether it is possible to further accelerate \AlgLG by a procedure with a weaker dependency on $k$. Or even better,  is it possible to have an algorithm that does not depend on $k$ at all and scales linearly with the data size $n$?
%

%
In this paper, we propose the first linear-time algorithm, \AlgRG,  for maximizing a non-negative monotone submodular function subject to a cardinality constraint $k$. We show that \AlgRG achieves a $(1-1/e-\epsilon)$ approximation guarantee to the optimum solution with running time $O(n\log(1/\epsilon))$ (measured in terms of function evaluations) that is \textit{independent} of $k$. Our experimental results on exemplar-based clustering and  active set selection in nonparametric learning also confirms that \AlgRG consistently outperforms \AlgLG by a large margin while achieving practically the same utility value. More surprisingly, in our experiments we observe that \AlgRG sometimes does not even evaluate all the items and shows a running time that is  less than $n$  while still providing  solutions close to the ones returned by \AlgLG. Due to its independence of $k$, \AlgRG is the first algorithm that truly scales to voluminous datasets. 

\section{Related Work}\label{sec:related}

Submodularity is a property of set functions with deep theoretical and practical consequences. For instance, submodular maximization generalizes many well-known combinatorial problems including maximum weighted matching, max coverage, and facility location, to name a few. It has also found numerous applications in machine learning and artificial intelligence such as  influence maximization \cite{kempe03}, information gathering~\cite{krause11submodularity}, document summarization~\cite{lin11class} and active learning~\cite{guillory2011-active-semisupervised-submodular,golovin11jair}.
%
%
In most of these applications one needs  to handle increasingly larger quantities of data. For this purpose, accelerated/lazy variants~\cite{minoux78accelerated,leskovec07} of the celebrated greedy algorithm of~\citet{nemhauser78} have been extensively used. 

\vspace{-3mm}
\paragraph{Scaling Up:}\looseness -1  To solve the optimization problem \eqref{problem} at scale, there have been very recent efforts to either make use of parallel computing methods or treat data in a streaming fashion. In particular, \citet{Chierichetti2010} and \citet{blelloch11}  addressed a particular instance of submodular functions, namely, maximum coverage  and provided a distributed method with a constant factor approximation to the centralized algorithm. More generally, \citet{kumar13fast} provided a constant approximation guarantee for general submodular functions with bounded marginal gains. Contemporarily, \citet{mirzasoleiman13distributed} developed a two-stage distributed algorithm that guarantees solutions close to the optimum if the dataset is massive and the submodular function is smooth. 

Similarly, \citet{gomes10budgeted} presented a heuristic streaming algorithm for submodular function maximization and showed that under strong assumptions about the way the data stream is generated their method is effective. Very recently, \citet{badanidiyuru14streaming} provided the first one-pass streaming algorithm with a constant factor approximation guarantee for general submodular functions without any assumption about the data stream. 

Even though the goal of this paper is quite different and complementary in nature  to the aforementioned  work, \AlgRG can be easily integrated into existing distributed methods. For instance, \AlgRG can replace \AlgLG for solving each sub-problem in the approach of \citet{mirzasoleiman13distributed}. More generally, any distributed algorithm that uses \AlgLG as a sub-routine, can directly benefit from our method and  provide even more efficient large-scale algorithmic frameworks.

\vspace{-3mm}
\paragraph{Our Approach:} In this paper, we develop the first centralized algorithm whose cost (i.e., number of function evaluations) is independent of the cardinality constraint, which in turn directly addresses the shortcoming of \AlgLG. Perhaps the closest, in spirit, to our efforts are approaches by \citet{wei2014fast} and \citet{bv14soda}. Concretely, \citet{wei2014fast} proposed a multistage algorithm, \AlgMG,  that tries to decrease the running time of \AlgLG by approximating the underlying submodular function with a set of (sub)modular functions that can be potentially evaluated less expensively. This approach is  effective only for those submodular functions that can be easily decomposed and approximated. Note again that \AlgRG can be used for solving the sub-problems in each stage of \AlgMG to develop a faster multistage method. 
\citet{bv14soda} proposed a different  centralized algorithm that achieves a $(1-1/e-\epsilon)$ approximation guarantee for general submodular functions using  $O(n/\epsilon\log(n/\epsilon))$ function evaluations. 
However, \AlgRG consistently outperforms their algorithm in practice in terms of cost, and returns higher utility value. In addition, \AlgRG uses only $O(n\log(1/\epsilon))$ function evaluations  in theory, and thus provides a stronger analytical guarantee.

\section{\AlgRG Algorithm}
In this section, we present our randomized greedy algorithm \AlgRG and then show how to combine it with lazy evaluations. We will show that \AlgRG has provably linear running time independent of $k$, while simultaneously having the same 
 approximation ratio guarantee (in expectation). In the following section we will further demonstrate through experiments that this is also reflected in practice, i.e., \AlgRG is substantially faster than \AlgLG, while being practically identical to it in terms of the utility. 

The main idea behind \AlgRG is to produce an element which improves the value of the solution roughly the same as greedy, but in a fast manner. This is achieved by a sub-sampling step. At a very high level this is similar to how stochastic gradient descent improves the running time of gradient descent for convex optimization. 

\subsection{Random Sampling}
The key reason that the classic greedy algorithm  works is that at each iteration $i$, an element is identified that reduces the gap to the optimal solution by a significant amount, i.e., by at least $(f(A^*)-f(A_{i-1}))/k$. 
This requires $n$ oracle calls per step, the main bottleneck of the classic greedy algorithm. Our main observation here is that by submodularity, we can achieve the same improvement by adding a uniformly random element from $A^*$ to our current set $A$. To get  this improvement, we will see that it is enough to randomly sample a set $R$ of size $(n/k)\log(1/\epsilon)$, which in turn overlaps with $A^*$ with probability $1-\epsilon$.  
This is the main reason we are able to achieve a boost in performance. 

The algorithm is formally presented in Algorithm~\ref{alg:linear-cardinality}. Similar to the greedy algorithm, our algorithm starts with an empty set and adds one element at each iteration. But in each step it first samples a set $R$ of size $(n/k)\log(1/\epsilon)$ uniformly at random and then adds the element from $R$ to $A$ which increases its value the most. 

\begin{algorithm}
\caption{\AlgRG}
\label{alg:linear-cardinality}
\begin{algorithmic}[1]
\INPUT $f:2^V \rightarrow \RR_+$, $k \in \{1,\ldots,n\}$.
\OUTPUT A set $A \subseteq V$ satisfying $|A| \leq k$.
\STATE $A \leftarrow \emptyset$.
\FOR {($i \leftarrow 1;~ i \le k;~ i \leftarrow i+1)$}
\STATE  $R \leftarrow$ a random subset obtained by sampling $s$ random elements from $V \setminus A$.
\STATE  $a_i \leftarrow \argmax_{a \in R} \Delta(a|A)$.
\STATE  $A \leftarrow A \cup \{a_i\}$ 
\ENDFOR
\RETURN $A$.
\end{algorithmic}
\end{algorithm}
\looseness -1 Our main theoretical result is the following. It shows that \AlgRG achieves a   near-optimal solution for general monotone submodular functions,
with 
computational complexity independent of the cardinality constraint. 
\begin{theorem}
\label{thm:cardinality}
Let $f$ be a non-negative monotone submoduar function. Let us also set $s=\frac{n}{k} \log \frac{1}{\epsilon}$. Then \AlgRG achieves a $(1-1/e-\epsilon)$ approximation guarantee in expectation to the optimum solution of problem \eqref{problem} with only $O(n \log \frac{1}{\epsilon})$ function evaluations. 
\end{theorem}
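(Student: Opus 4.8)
The plan is to split the claim into the (immediate) bound on the number of function evaluations and the (substantive) approximation guarantee, and to obtain the latter from a single one-step inequality combined with the usual greedy-style induction. For the cost: every one of the $k$ iterations samples a set $R$ with $|R|=s=\frac{n}{k}\log\frac1\epsilon$ and computes $\Delta(a|A)$ for each $a\in R$, which is $O(s)$ function evaluations per iteration once $f(A)$ is cached, hence $O(ks)=O(n\log\frac1\epsilon)$ in total, independent of $k$.

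The heart of the argument is the following one-step statement: conditioned on the current set $A_{i-1}=A$, $\E[\Delta(a_i\mid A)]\ge \frac{1-\epsilon}{k}\,(f(A^*)-f(A))$. Write $O=A^*\setminus A$ and $t=|O|$; if $O=\emptyset$ then monotonicity gives $f(A)\ge f(A^*)$ and the inequality is trivial. View the sampling of $R$ as drawing a uniformly random permutation of $V\setminus A$ and keeping its first $\min(s,|V\setminus A|)$ elements, and let $W$ be the first element of $O$ in this order. By symmetry among the elements of $O$, $W$ is uniform on $O$, and conditioned on $W=v$ the distribution of the position of $W$ is the same for every $v\in O$; hence $\Pr[W\in R\mid W=v]=\Pr[R\cap O\neq\emptyset]=:p$ for all $v$. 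Since $a_i$ maximizes the marginal over $R$ and marginals are nonnegative by monotonicity, $\Delta(a_i\mid A)\ge \Delta(W\mid A)\,\mathbb{1}[W\in R]$, so taking expectations and using submodularity (the sum of marginals dominates the marginal of the union) together with monotonicity, $\E[\Delta(a_i\mid A)]\ge p\cdot\frac1t\sum_{v\in O}\Delta(v\mid A)\ge \frac{p}{t}\big(f(A\cup A^*)-f(A)\big)\ge \frac{p}{t}\big(f(A^*)-f(A)\big)$. It remains to check $p/t\ge (1-\epsilon)/k$. Sampling without replacement, $\Pr[R\cap O=\emptyset]\le (1-t/|V\setminus A|)^s\le e^{-st/n}=\epsilon^{t/k}$ (and $p=1$ when $s\ge|V\setminus A|$), so $p\ge 1-\epsilon^{t/k}$; since $t\le k$, setting $\alpha=t/k\in(0,1]$ and using that $\alpha\mapsto 1-\epsilon^{\alpha}$ is concave and vanishes at $0$, it lies above its chord, giving $1-\epsilon^{\alpha}\ge \alpha(1-\epsilon)$, i.e. $p/t\ge(1-\epsilon)/k$.

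With the one-step inequality in hand, let $\delta_i=f(A^*)-\E[f(A_i)]$. Taking full expectations gives $\E[f(A_i)]-\E[f(A_{i-1})]\ge\frac{1-\epsilon}{k}\,\delta_{i-1}$, hence $\delta_i\le\big(1-\frac{1-\epsilon}{k}\big)\delta_{i-1}$, so $\delta_k\le\big(1-\frac{1-\epsilon}{k}\big)^k\delta_0\le e^{-(1-\epsilon)}f(A^*)\le(1/e+\epsilon)f(A^*)$ for $\epsilon\le1$ (since $e^{\epsilon}-1\le e\epsilon$ there), which is exactly $\E[f(A_k)]\ge(1-1/e-\epsilon)f(A^*)$, and $|A_k|\le k$ by construction.

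The main obstacle is precisely the one-step lemma for an \emph{arbitrary} value of $t=|A^*\setminus A_{i-1}|$. One cannot simply aim the sample at a single fixed good element of $O$ (that would only yield a gain of order $\frac{\log(1/\epsilon)}{k^2}$); the argument must use that $a_i$ is the best of all of $R$, which is what the random-permutation/symmetry step exploits to identify the law of $W$ and reduce the max to an average over $O$ scaled by $p$. The remaining tension is that a small $O$ gives a smaller hit probability $p$ but a larger per-hit gain $1/t$, and the concavity (chord) inequality $1-\epsilon^{\alpha}\ge\alpha(1-\epsilon)$ is exactly what balances the two and explains why a sample of size $\frac{n}{k}\log\frac1\epsilon$ already suffices. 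The edge cases ($O=\emptyset$, $|V\setminus A|<s$, and keeping denominators positive when $k$ is not tiny relative to $n$) are handled directly and contribute nothing beyond bookkeeping.
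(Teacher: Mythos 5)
Your proof is correct and follows essentially the same route as the paper: the same hit-probability bound $\Pr[R\cap(A^*\setminus A)\neq\emptyset]\geq(1-\epsilon)\,|A^*\setminus A|/k$ obtained from the exponential estimate plus the concavity/chord inequality, the same symmetry argument reducing the maximum over $R$ to a uniform average over $A^*\setminus A$, the same use of submodularity to lower-bound that average by $(f(A^*)-f(A))/|A^*\setminus A|$, and the same induction to $(1-1/e-\epsilon)$. Your random-permutation formalization of the symmetry step and your explicit treatment of the edge cases ($A^*\setminus A=\emptyset$, sampling without replacement) are just a more careful write-up of what the paper asserts in a single line.
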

Since there are $k$ iterations in total and at each iteration we have $(n/k)\log(1/\epsilon)$ elements, the total number of function evaluations cannot be more than $k\times (n/k)\log(1/\epsilon) = n\log(1/\epsilon)$.  The proof of the approximation guarantee  is given in the analysis section. 

\subsection{Random Sampling with Lazy Evaluation}
While our theoretical results show a provably linear time algorithm, we can combine the random sampling procedure with lazy evaluation to boost its performance. There are mainly two reasons why lazy evaluation helps. First, the  randomly sampled sets can overlap and we can exploit the previously evaluated marginal gains. Second, as in  \AlgLG  although the marginal values of the elements might change in each step of the greedy algorithm, often their ordering does not change \cite{minoux78accelerated}. Hence in line 4 of Algorithm~\ref{alg:linear-cardinality} we can apply directly  lazy evaluation as follows.  We maintain an upper bound $\rho(e)$ (initially $\infty$) on the marginal gain of all elements sorted in decreasing order. In each iteration $i$,  \AlgRG samples a set $R$. From this set $R$ it evaluates the element that comes on top of the list. Let's denote this element by $e$. It then updates the upper bound for $e$, i.e.,  $\rho(e)\leftarrow \Delta(e|A_{i-1})$.
If after the update $\rho(e)\geq \rho(e')$ for all $e'\neq e$ where $e,e'\in R$, submodularity guarantees that $e$ is the element with the largest marginal gain in the set $R$. Hence, lazy evaluation helps us reduce function evaluation in each round. 

\vspace{-.2cm}
\section{Experimental Results}\label{sec:experiments}
In this section, we address the following questions:
1) how well does \AlgRG perform compared to previous art and in particular \AlgLG, and 2) How does \AlgRG help us get near optimal solutions on large datasets by reducing the computational complexity? To this end, we compare the performance of our \AlgRG method to the
following benchmarks: \textsc{Random-Selection}, where the output is $k$ randomly selected data points from $V$; \AlgLG, where the output is the $k$ data points produced by the accelerated greedy method \cite{minoux78accelerated}; \textsc{Sample-Greedy}, where the output is the $k$ data points produced by applying \AlgLG on a subset of data points  parametrized by sampling probability $p$; and \AlgFG, where the output is the $k$ data points provided by  the algorithm of \citet{bv14soda}. 
In order to compare the computational cost of different methods independently of the concrete implementation and platform, in our experiments we measure the computational cost in terms of the number of function evaluations used. Moreover, to implement the \AlgSG method, random subsamples are generated geometrically using different values for probability $p$. Higher values of $p$ result in subsamples of larger size from the original dataset. 
To maximize fairness, we implemented an accelerated version of \AlgFG with lazy evaluations (not specified in the paper) and report the  best results in terms of function evaluations. 
Among all benchmarks, \AlgRS has the lowest computational cost (namely, one) as we need to only evaluate the selected set at the end of the sampling process. However, it provides the lowest utility. On the other side of the spectrum,  \AlgLG makes $k$ passes over the full ground set, providing typically the best solution in terms of utility. The lazy evaluation eliminates a large fraction of the function evaluations in each pass. Nonetheless, it is still computationally prohibitive for large values of $k$.

In our experimental setup, we focus on three important and classic machine learning applications: nonparametric learning, exemplar-based clustering, and sensor placement.
%
\vspace{-.2cm}
\paragraph{Nonparametric Learning.}
Our first application is data subset selection in nonparametric learning. We focus on the special case of Gaussian Processes (GPs) below, but similar problems arise in large-scale kernelized SVMs and other kernel machines.
Let $\mathbf{X}_{V}$, be a set of random variables indexed by  the ground set $V$. In a Gaussian Process (GP) we assume that  every  subset $\mathbf{X}_S$, for $S=\{e_1, \dots, e_s\}$, is distributed according to a multivariate normal distribution, i.e., 
$P(\mathbf{X}_S = \mathbf{x}_S) = \mathcal{N}(\mathbf{x}_S;\mu_S, \Sigma_{S,S})$,
 where   $ \mu_S = (\mu_{e_1}, \dots, \mu_{e_s})$ and $\Sigma_{S,S} = [\kr{i}{j}] (1\leq i,j\leq k)$
are the prior mean vector and  prior covariance matrix, respectively. 
The covariance matrix is given in terms of a positive definite kernel $\mathcal{K}$, e.g., the squared exponential kernel $\kr{i}{j} = \exp(-|e_i-e_j|^2_2/h^2)$ is a common choice in practice.
%
In GP regression, each data point $e\in V$ is considered a random variable. Upon observations $\mathbf{y}_A = \mathbf{x}_A+\mathbf{n}_A$ (where $\mathbf{n}_A$ is a vector of independent Gaussian noise with variance $\sigma^2$), the predictive distribution of a new data point $e\in V$ is a normal distribution $P(\mathbf{X}_e\mid \mathbf{y}_A) = \mathcal{N}(\mu_{e|A}, \Sigma^2_{e|A})$, where 
\begin{align}\label{eq:GP}
&\mu_{e|A} = \mu_e + \Sigma_{e,A} (\Sigma_{A,A}+\sigma^2\mathbf{I})^{-1} (\mathbf{x}_A-\mu_A), \nonumber \\
&\sigma^2_{e|A} = \sigma^2_e - \Sigma_{e,A} (\Sigma_{A,A}+\sigma^2\mathbf{I})^{-1}\Sigma_{A,e}.
\end{align}
Note that evaluating \eqref{eq:GP} is computationally expensive as it requires a matrix inversion. Instead, most efficient approaches for making predictions in GPs rely on choosing a small -- so called \textit{active} -- set of data points. For instance, in the Informative Vector Machine (IVM) we seek a summary $A$ such that the information gain, $f(A) = I (\mathbf{Y}_A;\mathbf{X}_{V}) = H(\mathbf{X}_{V}) - H(\mathbf{X}_{V}|\mathbf{Y}_{A})=\frac{1}{2}\log\det(\mathbf{I}+\sigma^{-2}\Sigma_{A,A})
$ 
is maximized. It can be shown that this choice of $f$ is monotone submodular \cite{krause05near}. For small values of $|A|$, running \AlgLG is possible. However, we see that as the size of the active set or summary $A$ increases, the only viable option in practice is \AlgRG. 
%

\looseness -1 
In our  experiment we chose a Gaussian kernel with $h = 0.75$ and $\sigma = 1$. We used the Parkinsons Telemonitoring dataset \cite{tsanas2010enhanced} consisting of 5,875 bio-medical voice measurements with 22 attributes from people with early-stage Parkinson’s disease. We normalized the vectors to zero mean and unit norm. Fig. \ref{subfig:parkinson_utility} and \ref{subfig:parkinson_cost} compare the utility and computational cost of \AlgRG to the benchmarks for different values of $k$. 
For \AlgTG, different values of $\epsilon$ have been chosen such that a performance close to that of \AlgLG is obtained. Moreover, different values of $p$ have been chosen  such that the cost of \AlgSG is almost equal to that of \AlgRG for different values of $\epsilon$.
As we can see, \AlgRG provides the closest (practically identical) utility to that of \AlgLG with  much lower  computational cost. Decreasing the value of $\varepsilon$ results in higher utility at the price of higher computational cost.
Fig. \ref{subfig:parkinson_k=50} shows the utility versus cost of \AlgRG along with the other benchmarks for a fixed  $k = 200$ and  different values of $\epsilon$. \AlgRG provides very compelling tradeoffs between utility and cost compared to all benchmarks, including \AlgLG.
\vspace{-3mm}
\paragraph{Exemplar-based clustering.} 
A classic way to  select a set of exemplars that best represent a massive dataset is to solve the $k$-medoid problem \cite{kaufman2009finding} by minimizing the sum of pairwise dissimilarities between exemplars $A$ and elements of the dataset $V$ as follows: $L(A)=\frac{1}{V}\sum_{e \in V} \min_{v \in A} d(e,v),$
where $d \!:\! V\! \times \!V \!\rightarrow\! R$ is a distance function, encoding the dissimilarity between elements. By introducing an appropriate auxiliary element $e_0$ 
we can turn $L$ into a monotone submodular function \cite{gomes10budgeted}: $f(A) = L(\{e_0\}) - L(A \cup \{e_0\}).$ Thus maximizing $f$ is equivalent to minimizing $L$ which 
provides a very good solution. But the problem becomes computationally challenging as the size of the summary  $A$ increases.

In our  experiment we chose $d(x, x') = ||x-x'||^2$ for the dissimilarity measure. 
%
We used a set of 10,000 Tiny Images \cite{torralba200880} where each $32 \times 32$ RGB  image was represented by a 3,072 dimensional vector. We subtracted from each vector the mean value, normalized it to unit norm, and used the origin as the auxiliary exemplar. 
Fig. \ref{subfig:images_utility} and \ref{subfig:images_cost}  compare the utility and computational cost of \AlgRG  to the benchmarks for different values of $k$.  It can be seen that \AlgRG  outperforms the benchmarks with significantly lower computational cost.
Fig. \ref{subfig:images_k=50} compares the utility versus cost of different methods for a  fixed $k = 200$ and various $p$ and $\epsilon$.  Similar to the previous experiment, \AlgRG achieves near-maximal utility at substantially lower cost compared to the other benchmarks. 

\textbf{Large scale experiment.} We also performed a similar experiment on a larger set of 50,000 Tiny Images. For this dataset, we were not able to run \AlgLG and  \AlgFG. Hence, we compared the utility and cost of  \AlgRG  with \AlgRS using different values of $p$. As shown in Fig. \ref{subfig:images50K_utility} and Fig. \ref{subfig:images50K_cost}, \AlgRG outperforms \AlgSG in terms of both utility and cost for different values of $k$. Finally, as Fig. \ref{subfig:images50K_k=200} shows that \AlgRG achieves the highest utility but performs much faster compare to \AlgSG which is the only practical solution for this larger dataset. 

\vspace{-5mm}
\paragraph{Sensor Placement.}
When monitoring spatial phenomena, we want to deploy a limited number of sensors in an area in order to quickly detect contaminants.	
Thus, the problem would be to select a subset of all possible locations $A \subseteq V$ to place sensors. 
Consider a set of intrusion scenarios $\mathcal{I}$ where each scenario $i \in \mathcal{I}$ defines the introduction of a contaminant at a specified point in time. For each sensor $s \in S$ and scenario $i$, the detection time, $T(s , i)$, is defined as the time it takes for  $s$ to detect $i$. If $s$ never detects  $i$, we set $T(s , i) = \infty$.  For a set of sensors $A$, detection time for scenario $i$ could be defined as  $T(A , i) = \min_{s \in A} T(s , i)$. 
Depending on the time of detection, we incur penalty $\pi_i(t)$ for detecting  scenario $i$ at time $t$.
Let $\pi_i(\infty)$ be the maximum penalty incurred if the scenario $i$ is not detected at all. Then, the penalty reduction for scenario $i$ can be defined as $R(A , i) = \pi_i(\infty) - \pi_i (T(A,i))$. 
Having a probability distribution over possible scenarios, we can calculate the expected penalty reduction for a sensor placement $A$ as $R(A)=\sum_{i \in \mathcal{I}} P(i)R(A,i)$.
This function is montone submodular \cite{krause08efficient} and for which the greedy algorithm gives us a good solution. For massive data however, we may need to resort to \AlgRG.

In our experiments we used the 12,527 node distribution network provided as part of the Battle of Water Sensor Networks (BWSN) challenge \cite{ostfeld2008battle}. 
Fig. \ref{subfig:water_imp_utility} and \ref{subfig:water_imp_cost}  compare the utility and computational cost of \AlgRG  to the benchmarks for different values of $k$.  It can be seen that \AlgRG  outperforms the benchmarks with significantly lower computational cost.
Fig. \ref{subfig:water_imp_SG_200} compares the utility versus cost of different methods for a  fixed $k = 200$ and various $p$ and $\epsilon$.  
Again \AlgRG shows similar behavior to the previous experiments by achieving near-maximal utility at much lower cost compared to the other benchmarks. 

\begin{figure*}[th!]
        \centering  
       \subfloat[Parkinsons \label{subfig:parkinson_utility}]{%
      \includegraphics[width=0.3\textwidth, height=0.265\textwidth]{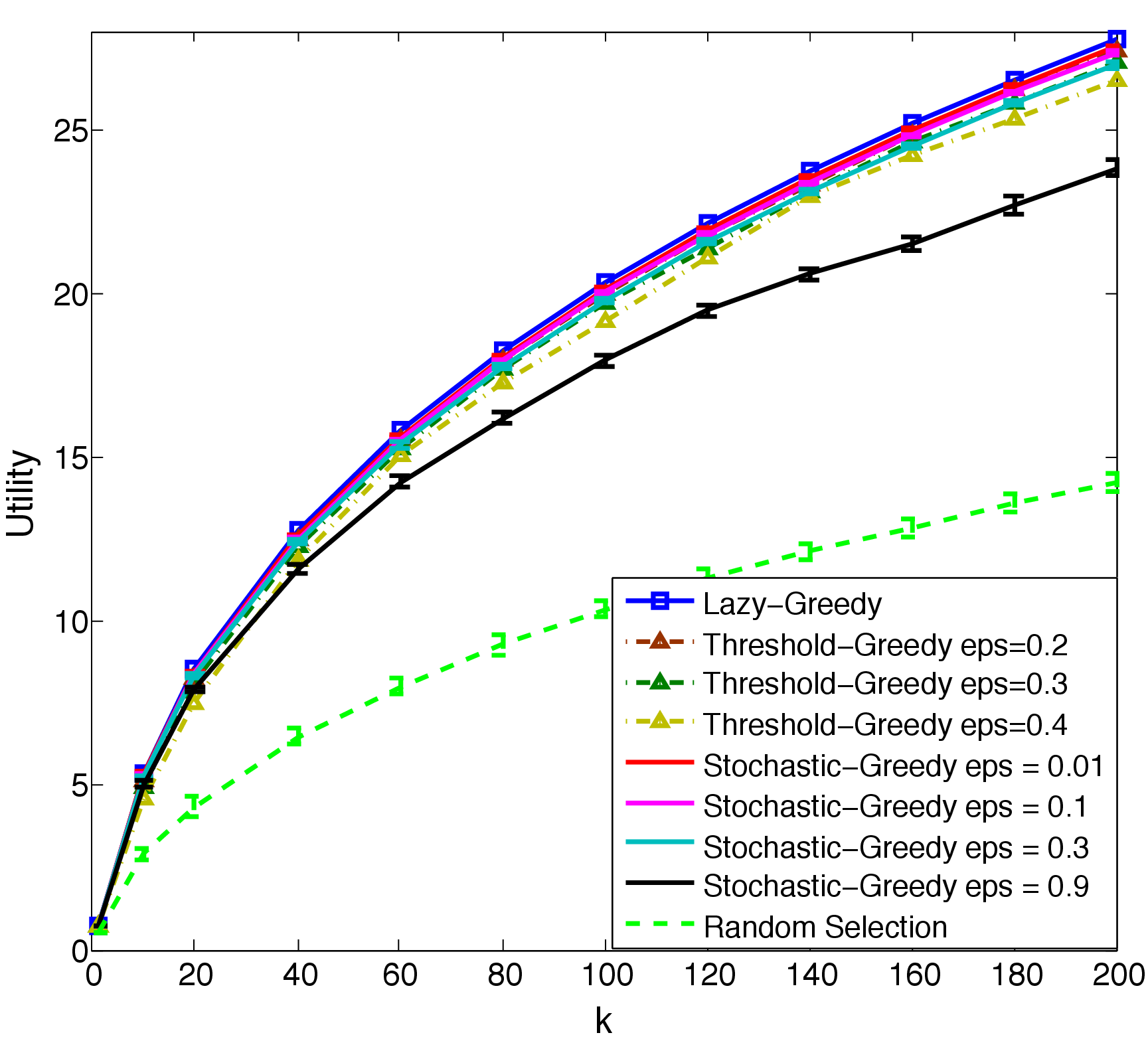}
     }
     \subfloat[Parkinsons \label{subfig:parkinson_cost}]{%
      \includegraphics[width=0.3\textwidth, height=0.265\textwidth]{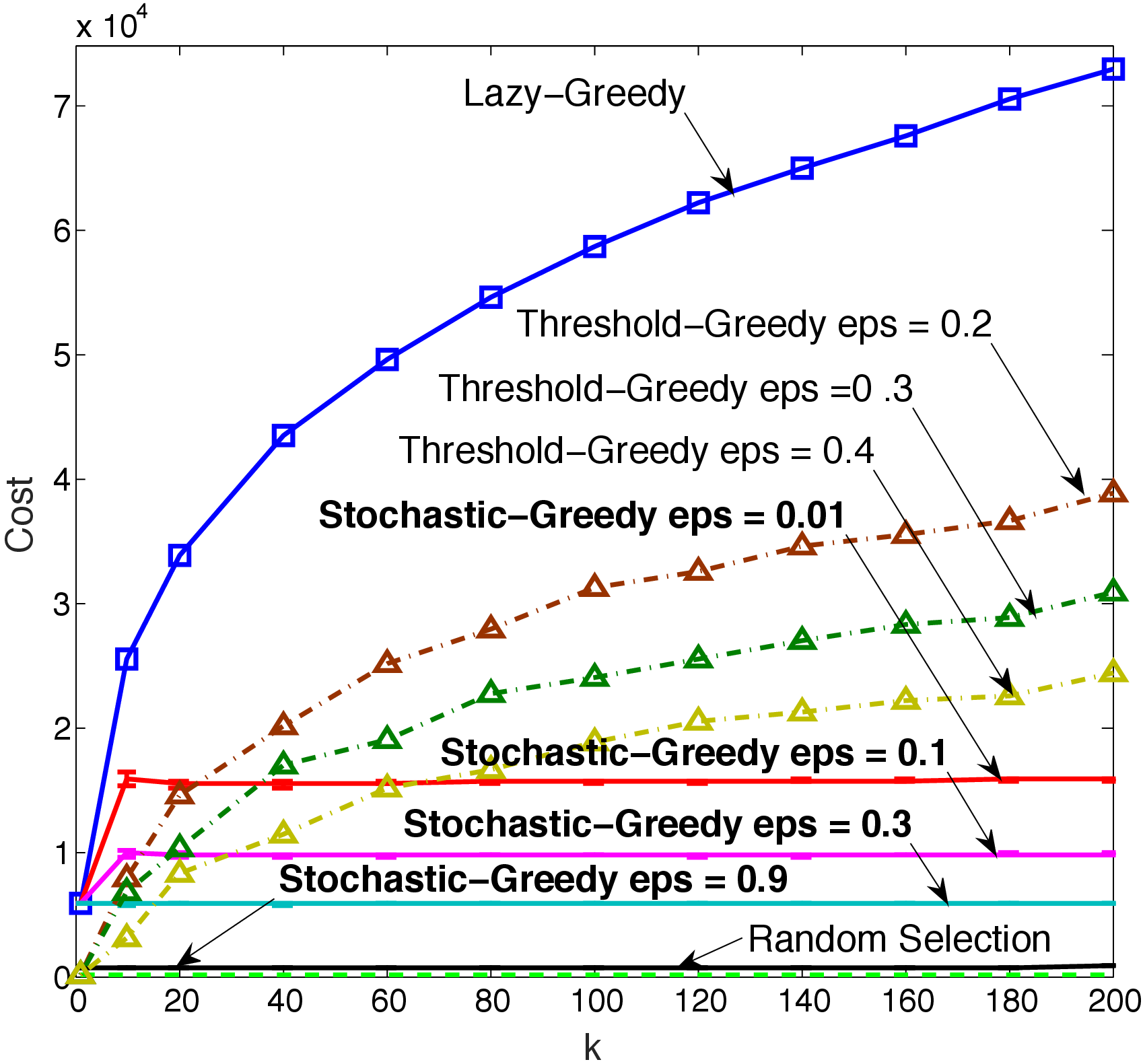}
    	 }
    	    \subfloat[Parkinsons \label{subfig:parkinson_k=50}]{%
      \includegraphics[width=0.3\textwidth, height=0.265\textwidth]{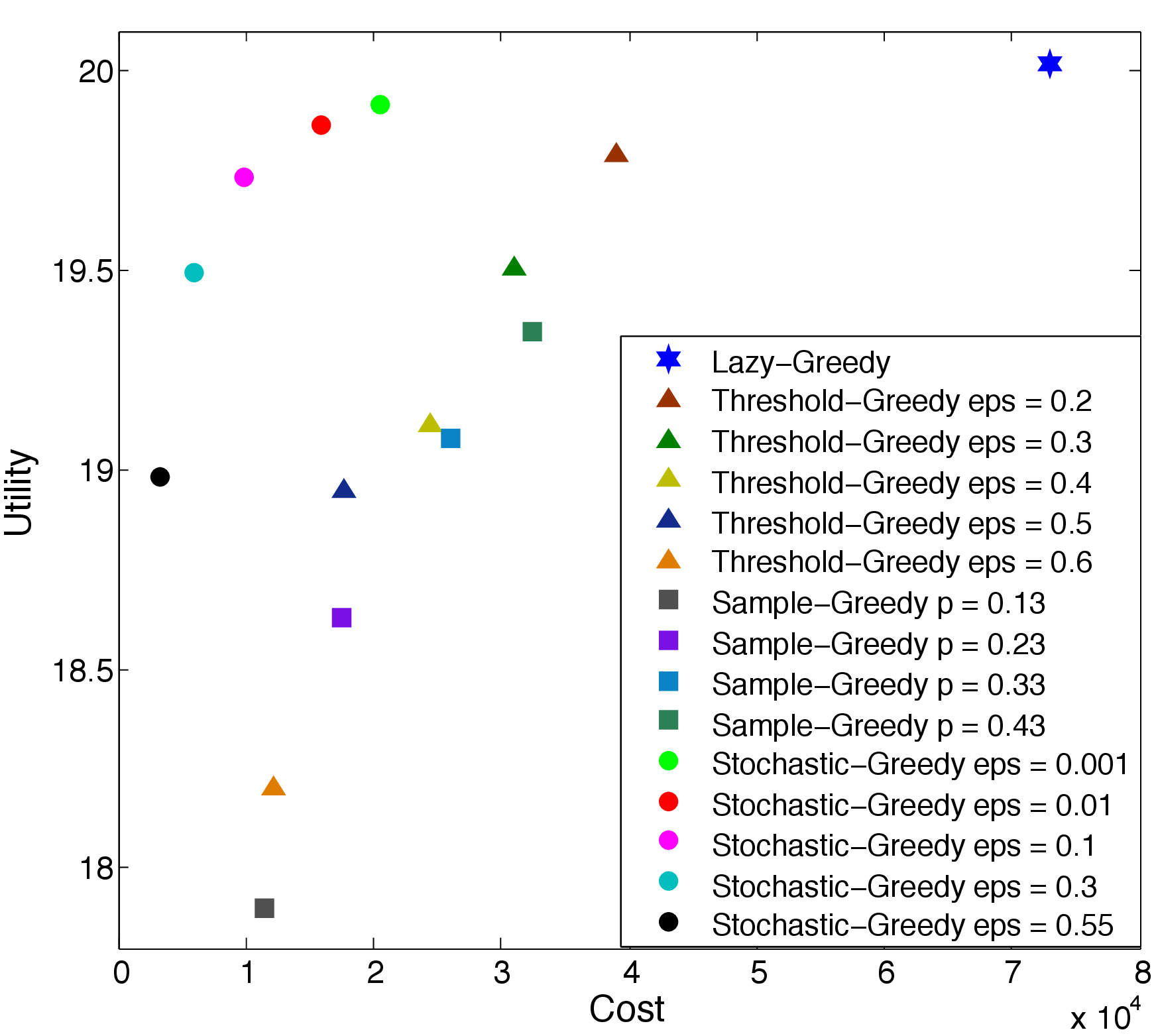}
    }
    \vspace{-4.5mm}
    \\
        \subfloat[Images 10K \label{subfig:images_utility}]{%
      \includegraphics[width=0.3\textwidth, height=0.265\textwidth]{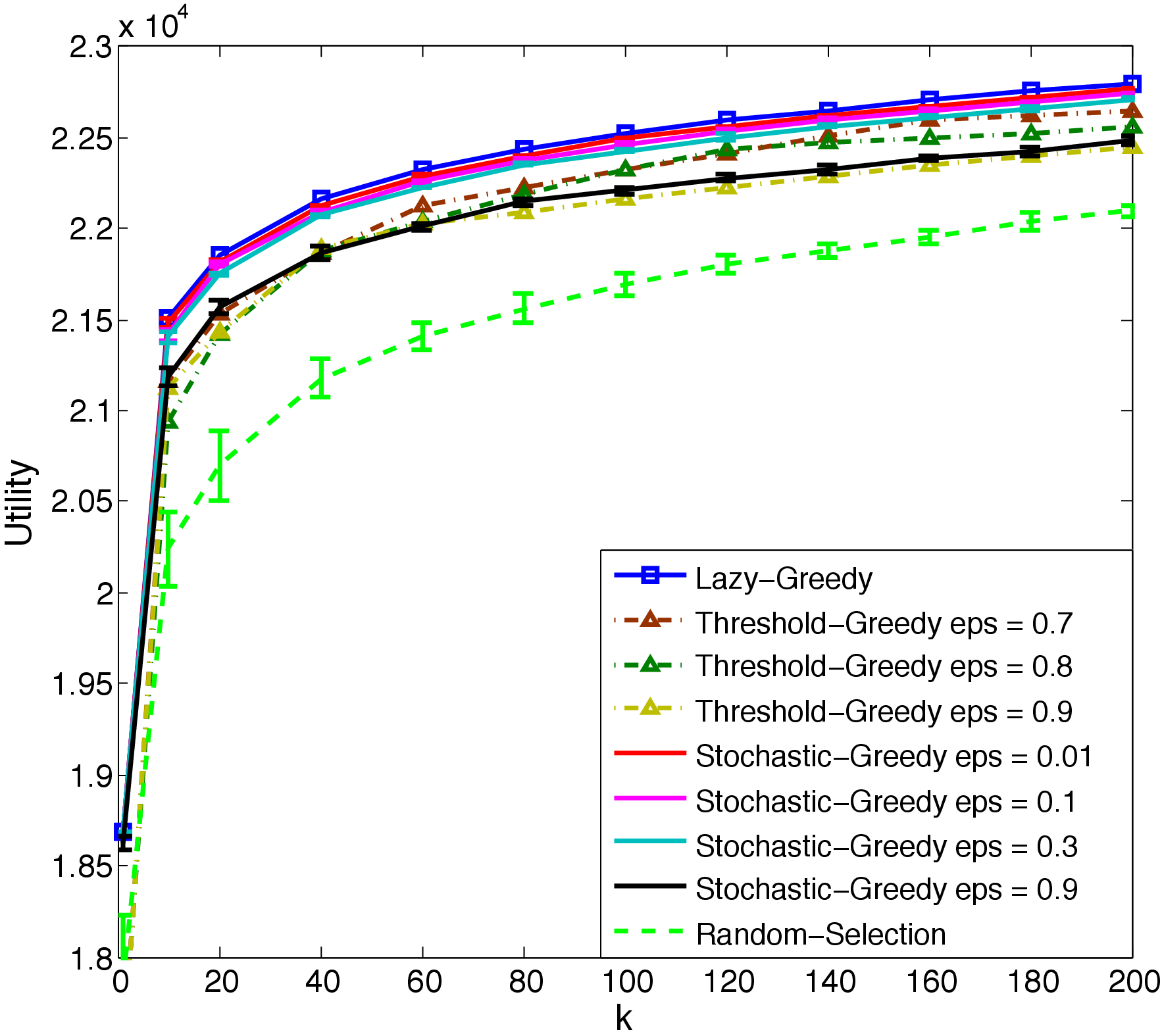}
    }
    \subfloat[Images 10K \label{subfig:images_cost}]{%
      \includegraphics[width=0.3\textwidth, height=0.265\textwidth]{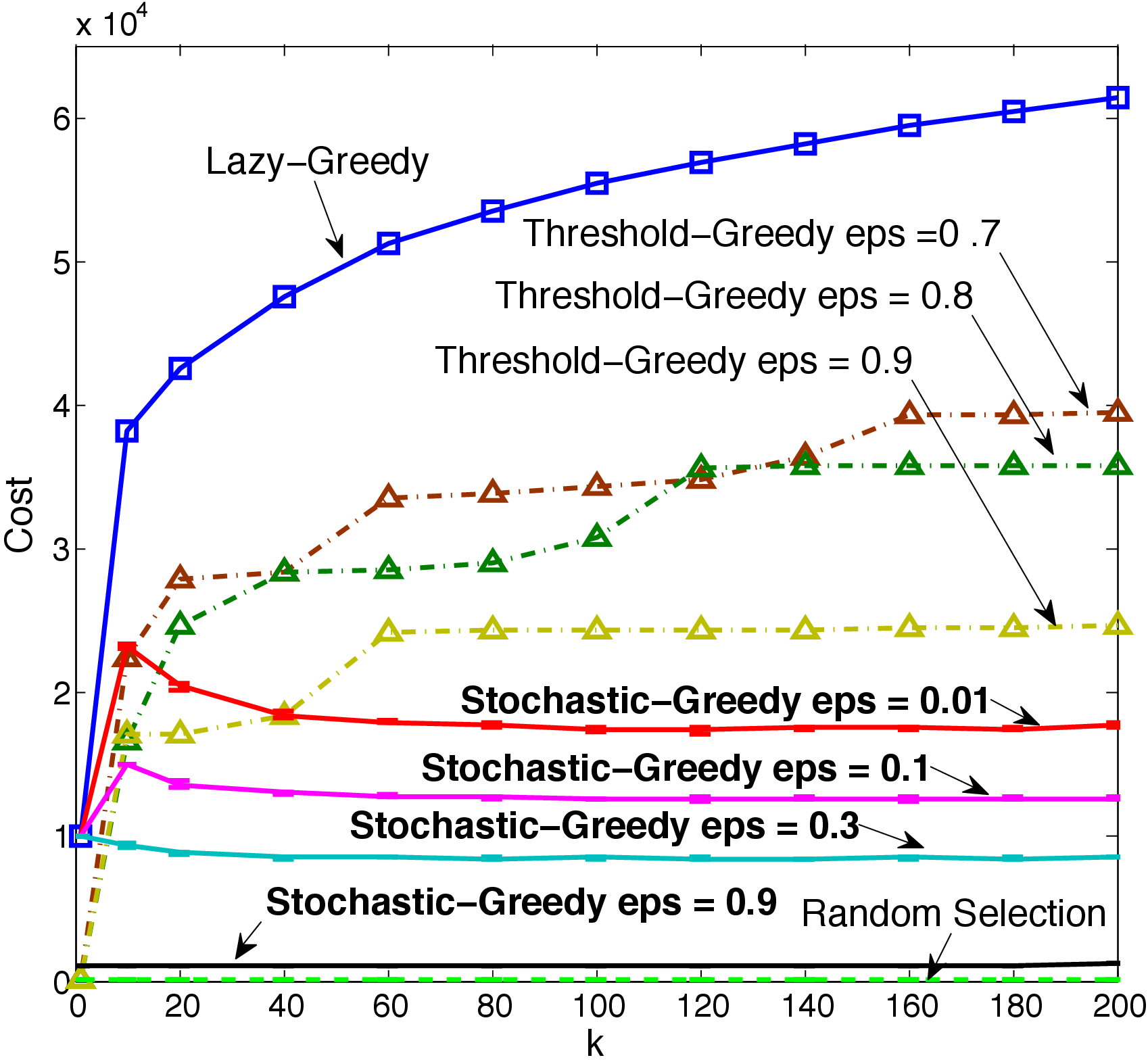}
    }
    \subfloat[Images 10K \label{subfig:images_k=50}]{%
      \includegraphics[width=0.3\textwidth, height=0.265\textwidth]{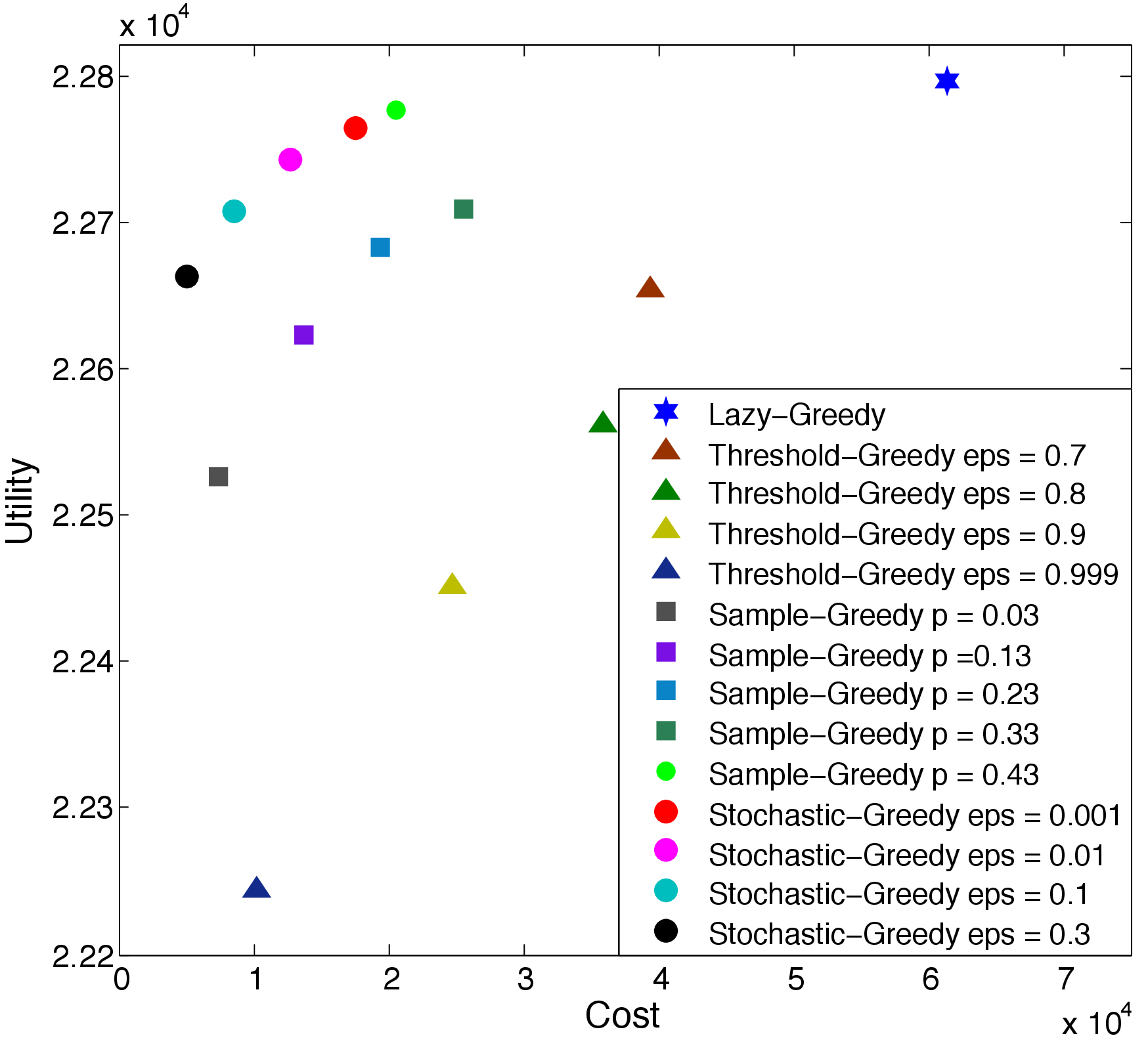}
    }
    \vspace{-4.2mm}
    \\
      \subfloat[Water Network \label{subfig:water_imp_utility}]{%
      \includegraphics[width=0.3\textwidth, height=0.265\textwidth]{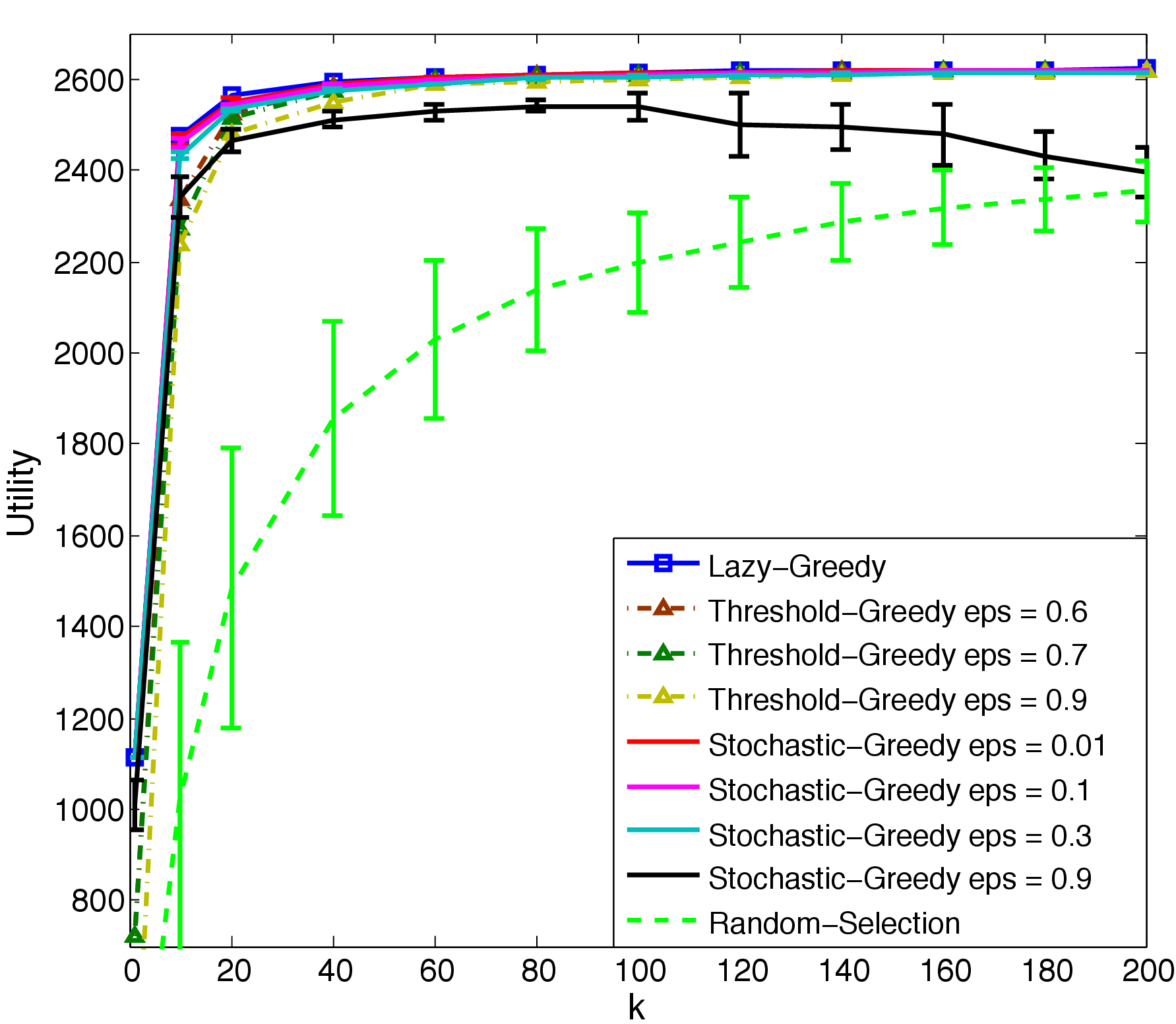}
    }
    \subfloat[Water Network \label{subfig:water_imp_cost}]{%
      \includegraphics[width=0.3\textwidth, height=0.265\textwidth]{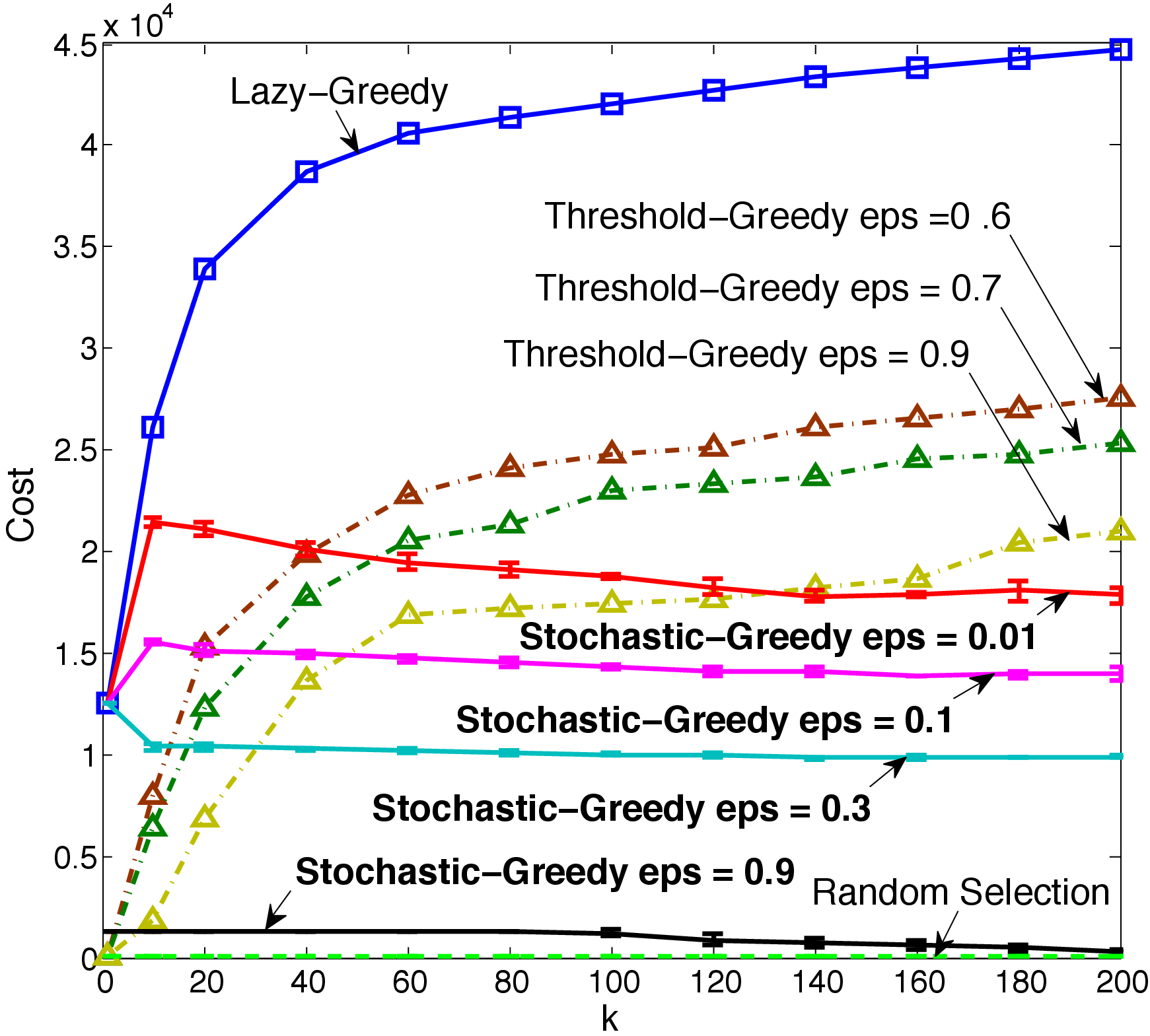}
    }
    \subfloat[Water Network \label{subfig:water_imp_SG_200}]{%
      \includegraphics[width=0.3\textwidth, height=0.265\textwidth]{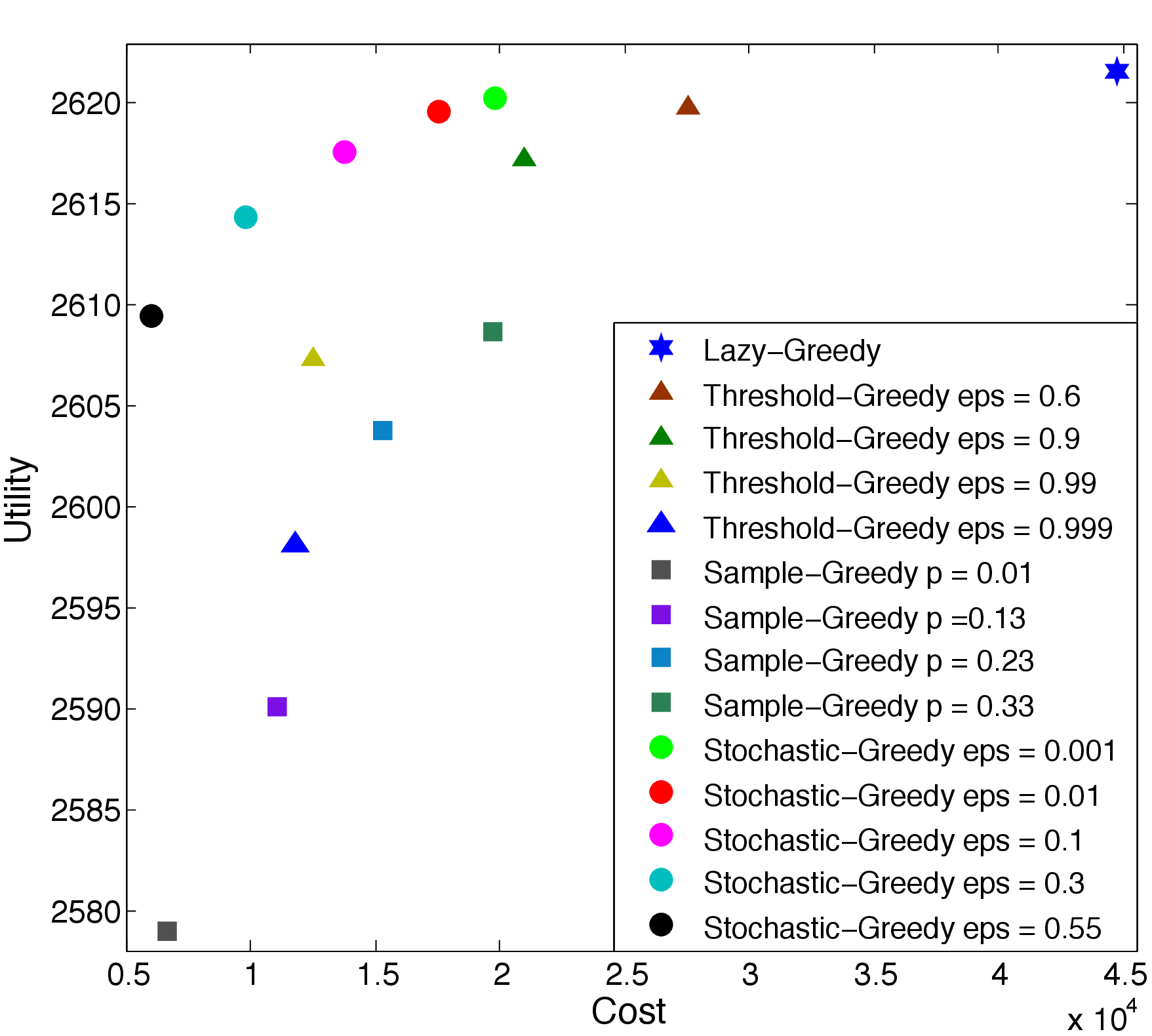}
    }
    \vspace{-4.3mm}
    \\
    \subfloat[Images 50K \label{subfig:images50K_utility}]{%
      \includegraphics[width=0.3\textwidth, height=0.265\textwidth]{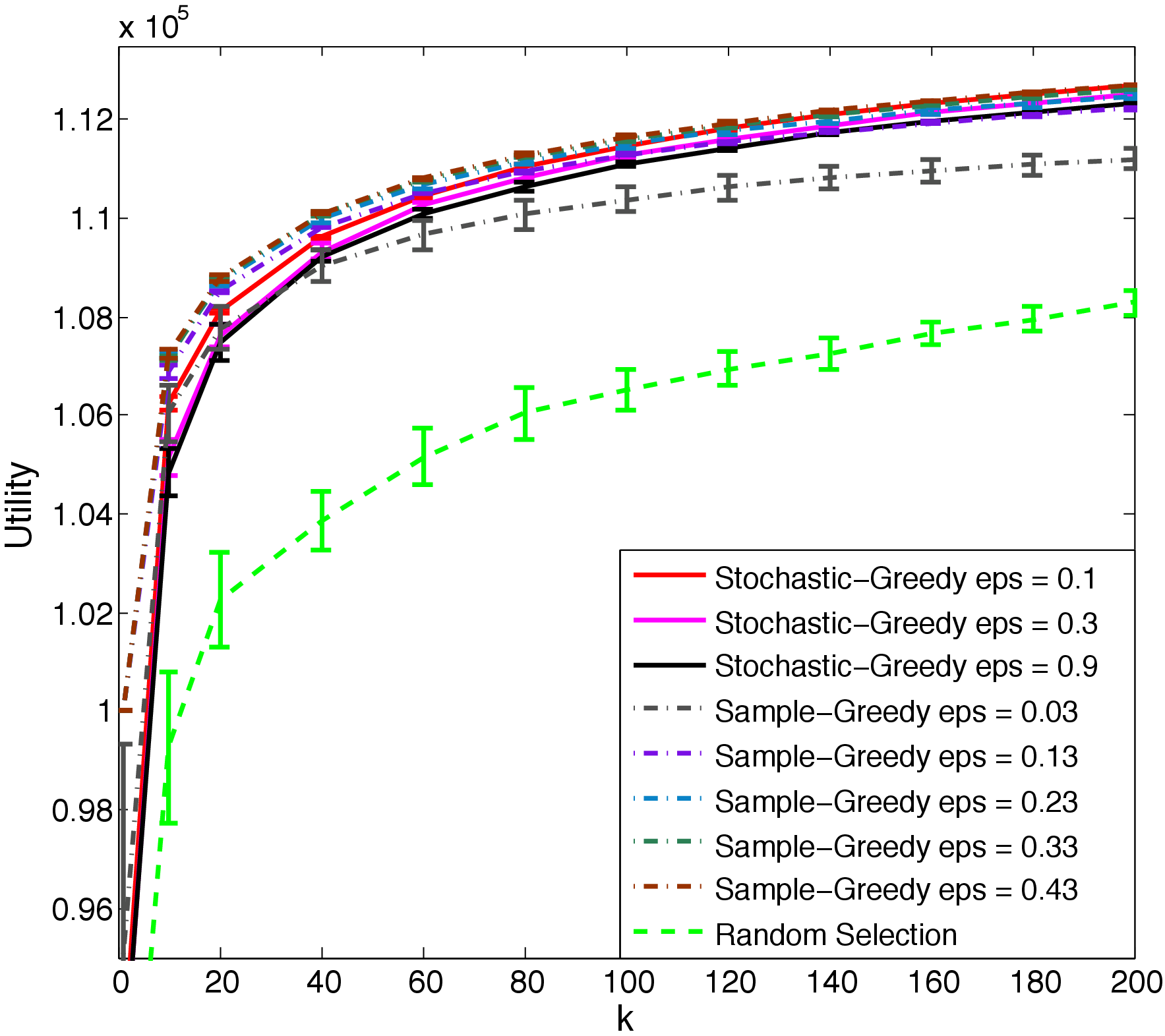}
     }
     \subfloat[Images 50K \label{subfig:images50K_cost}]{%
      \includegraphics[width=0.3\textwidth, height=0.265\textwidth]{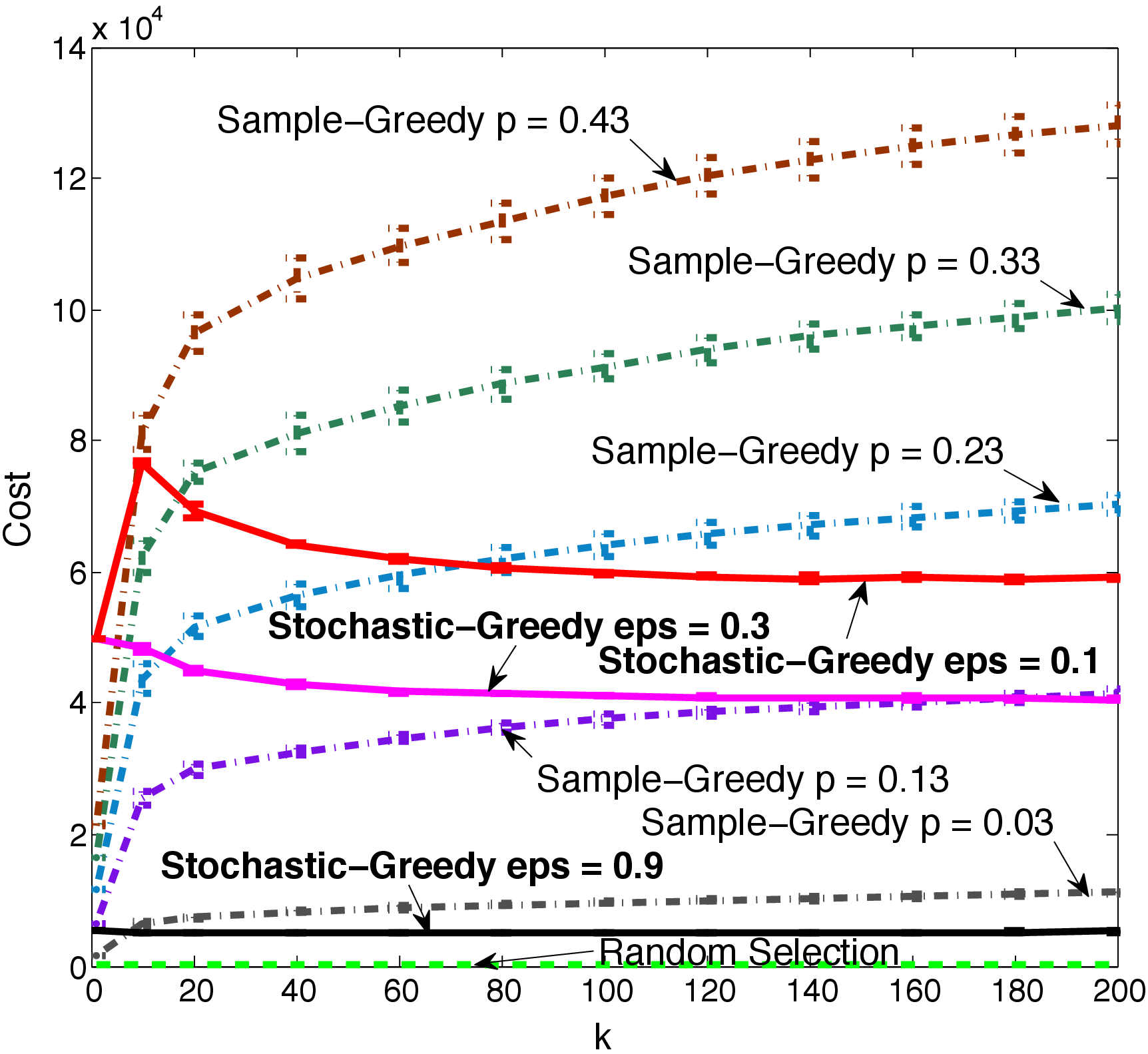}
    	 }
    	 \subfloat[Images 50K \label{subfig:images50K_k=200}]{%
      \includegraphics[width=0.3\textwidth, height=0.265\textwidth]{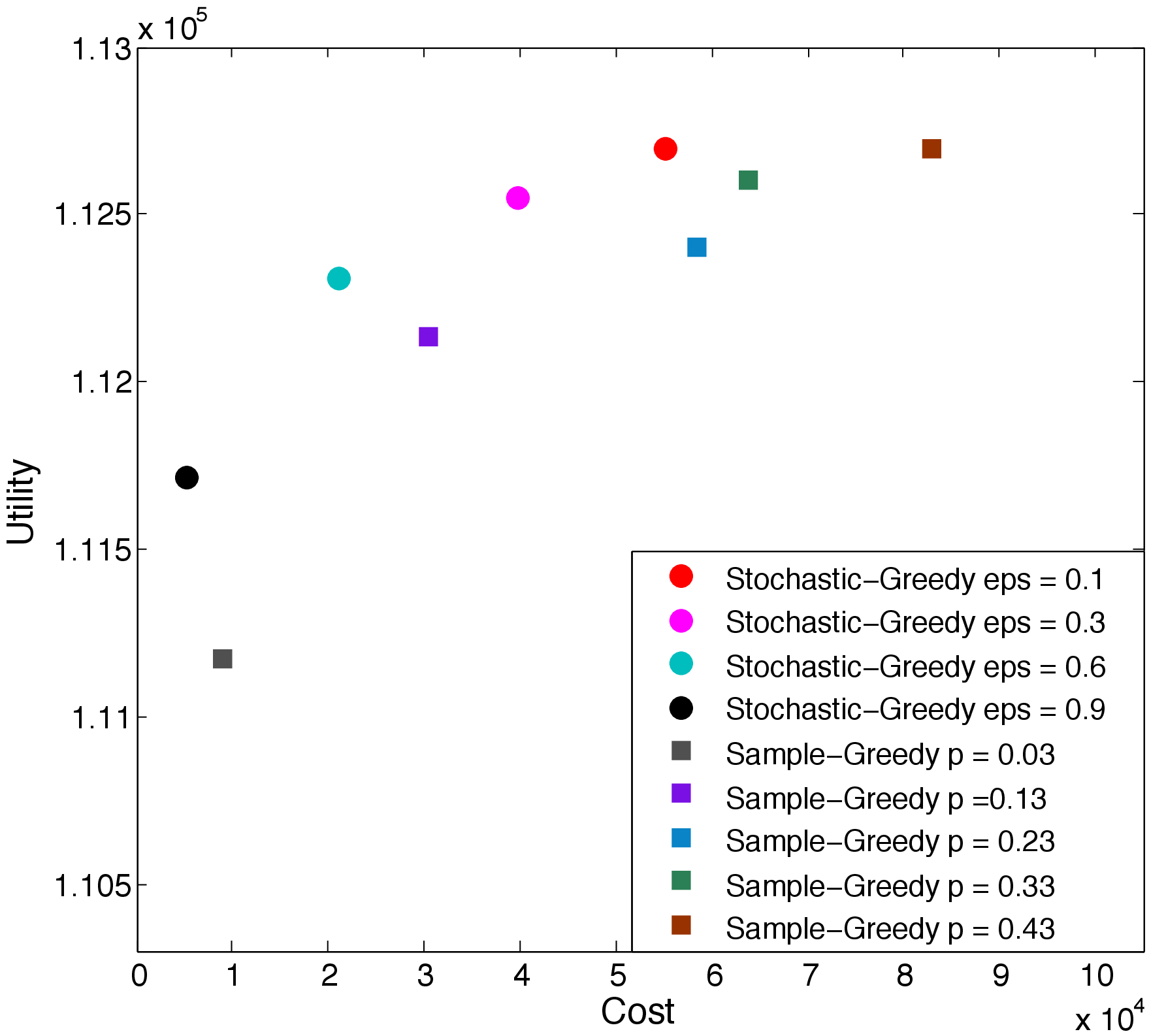}
    	 }
    \vspace{-2mm}
        \caption{Performance comparisons. a), d) g) and j)  show the performance of all the algorithms for different values of $k$ on \textit{Parkinsons Telemonitoring}, a set of 10,000 \textit{Tiny Images}, \textit{Water Network}, and a set of 50,000 \textit{Tiny Images} respectively. b), e) h) and k)  show the cost of all the algorithms for different values of $k$ on the same datasets. c), f), i), l) show the utility obtained versus cost for a fixed $k = 200$. 
        }\label{fig:exp}
    \vspace{-.2cm}
\end{figure*}

\vspace{-.2cm}
\section{Conclusion}
We have developed the first linear time algorithm \AlgRG with no dependence on $k$ for cardinality constrained submodular maximization. \AlgRG provides a $1-1/e-\epsilon$ approximation guarantee to the optimum solution with only $n \log \frac{1}{\epsilon}$ function evaluations. We have also demonstrated the effectiveness of our algorithm through an extensive set of experiments. As these show, \AlgRG achieves a major fraction of the  function utility with much less computational cost. This improvement is useful even in approaches that make use of  parallel computing or decompose the submodular function into simpler functions for faster evaluation. The properties of \AlgRG make it very appealing and necessary for solving very large scale problems. Given the importance of submodular optimization to numerous AI and machine learning applications, we believe our results provide an important step towards addressing such problems at scale. 

\section{Appendix, Analysis}
 The following lemma gives us the approximation guarantee.

\begin{lemma}
\label{cl:step-gain}
Given a current solution $A$, the expected gain of \AlgRG in one step is at least $\frac{1-\epsilon}{k} \sum_{a \in A^* \setminus A} \Delta(a|A)$.
\end{lemma}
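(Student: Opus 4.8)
The plan is to fix the current solution $A$ and write $\Omega = A^* \setminus A$; the goal is to lower bound $\E[\Delta(a_i|A)]$, where $R$ is the sampled set of size $s = \frac{n}{k}\log\frac{1}{\epsilon}$ drawn uniformly (without replacement) from $V\setminus A$ and $a_i = \argmax_{a\in R}\Delta(a|A)$. If $\Omega=\emptyset$ the right-hand side is $0$ and there is nothing to prove, since marginal gains are nonnegative by monotonicity; so assume $\Omega\neq\emptyset$. The first step is to split on the event $\mathcal{E}=\{R\cap\Omega\neq\emptyset\}$: since $a_i$ is the best element of $R$ and marginal gains are nonnegative, $\E[\Delta(a_i|A)]\ge \Pr[\mathcal{E}]\cdot\E[\Delta(a_i|A)\mid\mathcal{E}]$.

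The crux is the second step: bounding $\E[\Delta(a_i|A)\mid\mathcal{E}]$ from below by $\frac{1}{|\Omega|}\sum_{a\in\Omega}\Delta(a|A)$. Conditioning further on $|R\cap\Omega| = j$ for a fixed $j\ge 1$, symmetry of uniform sampling without replacement makes $R\cap\Omega$ a uniformly random $j$-subset of $\Omega$; hence a uniformly random element $a$ of $R\cap\Omega$ has the same marginal distribution as a uniformly random element of $\Omega$. Since $\Delta(a_i|A)\ge\Delta(a|A)$ for every $a\in R$, taking expectations gives $\E[\Delta(a_i|A)\mid |R\cap\Omega|=j]\ge\frac{1}{|\Omega|}\sum_{a\in\Omega}\Delta(a|A)$ for each $j\ge1$, and averaging over $j\ge1$ gives the claim. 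This ``the maximum over $R$ dominates a uniformly random draw from $\Omega$'' observation is the heart of the argument, and realizing that one should compare against the average over all of $\Omega$, rather than trying to track which optimal element is actually maximal, is, I expect, the main conceptual obstacle.

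The third step is a routine probability estimate: $\Pr[R\cap\Omega=\emptyset]$ is a product of $s$ factors, each at most $1-\tfrac{|\Omega|}{|V\setminus A|}\le 1-\tfrac{|\Omega|}{n}$, so $\Pr[\mathcal{E}]\ge 1-\bigl(1-\tfrac{|\Omega|}{n}\bigr)^s\ge 1-e^{-s|\Omega|/n} = 1-\epsilon^{|\Omega|/k}$ by our choice of $s$ (and $\Pr[\mathcal{E}]=1$ in the degenerate case $s\ge|V\setminus A|$, which only helps). Combining the three steps, $\E[\Delta(a_i|A)]\ge\bigl(1-\epsilon^{|\Omega|/k}\bigr)\cdot\frac{1}{|\Omega|}\sum_{a\in\Omega}\Delta(a|A)$. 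Finally, since $t\mapsto 1-\epsilon^t$ is concave on $[0,1]$ and coincides with the line $t\mapsto(1-\epsilon)t$ at $t=0$ and $t=1$, we get $1-\epsilon^{|\Omega|/k}\ge(1-\epsilon)\tfrac{|\Omega|}{k}$ because $\tfrac{|\Omega|}{k}\in(0,1]$; substituting this cancels the factor $\frac{1}{|\Omega|}$ and yields exactly $\frac{1-\epsilon}{k}\sum_{a\in\Omega}\Delta(a|A)$, as desired.
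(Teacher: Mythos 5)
Your proof is correct and follows essentially the same route as the paper's: lower-bound $\Pr[R\cap(A^*\setminus A)\neq\emptyset]$ by $(1-\epsilon)\frac{|A^*\setminus A|}{k}$ via the exponential estimate and concavity, then argue that conditioned on a nonempty intersection the maximizer over $R$ dominates a uniformly random element of $A^*\setminus A$ by symmetry. Your version is in fact slightly more careful than the paper's (handling sampling without replacement directly and conditioning on $|R\cap\Omega|=j$ to make the symmetry argument rigorous), but the ideas are identical.
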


\begin{proof}
Let us estimate the probability that $R \cap (A^* \setminus A) \neq \emptyset$. The set $R$ consists of $s = \frac{n}{k} \log \frac{1}{\epsilon}$ random samples from $V \setminus A$ (w.l.o.g.~with repetition), and hence
\begin{align*}
\Pr[R \cap (A^* \setminus A) = \emptyset] &=  \left(1 - \frac{|A^* \setminus A|}{|V \setminus A|} \right)^s \\[-2mm]
&\leq e^{-s \frac{|A^* \setminus A|}{|V \setminus A|}} \\[-2mm]
&\leq e^{-\frac{s}{n} |A^* \setminus A|}.\vspace{-3mm}
\end{align*}
Therefore, by using the concavity of $1 - e^{-\frac{s}{n} x}$ as a function of $x$ and the fact that $x = |A^* \setminus A| \in [0,k]$, we have
 $$ \Pr[R \cap (A^* \setminus A) \! \neq \! \emptyset] \! \geq \! 1 - e^{-\frac{s}{n} |A^* \setminus A|} \! \geq \! (1 - e^{-\frac{sk}{n}}) \frac{|A^* \setminus A|}{k}.$$
Recall that we chose $s = \frac{n}{k} \log \frac{1}{\epsilon}$, which gives
\begin{equation}
\label{eq:hit-bound}
\Pr[R \cap (A^* \setminus A)  \neq \emptyset] \geq (1 - \epsilon) \frac{|A^* \setminus A|}{k}.
\end{equation}
Now consider \AlgRG: it picks an element $a \in R$ maximizing the marginal value $\Delta(a|A)$. This is clearly as much as the marginal value of an element randomly chosen from $R \cap (A^* \setminus A)$ (if nonempty). Overall, $R$ is equally likely to contain each element of $A^* \setminus A$, so a uniformly random element of $R \cap (A^* \setminus A)$ is actually a uniformly random element of $A^* \setminus A$. Thus, we obtain
$$ \E[\Delta(a|A)] \!\geq\! \Pr[R \cap (A^* \setminus A) \neq \emptyset] \times \frac{1}{|A^* \setminus A|} \!\!\sum_{a \in A^* \setminus A} \!\!\!\Delta(a|A).$$
Using (\ref{eq:hit-bound}), we conclude that $ \E[\Delta(a|A)] \geq \frac{1-\epsilon}{k}  \sum_{a \in A^* \setminus A}  \Delta(a|A)$.
\end{proof}
Now it is straightforward to finish the proof of Theorem~\ref{thm:cardinality}. Let $A_i = \{a_1,\ldots,a_i\}$ denote the solution returned by \AlgRG after  $i$ steps.
From Lemma~\ref{cl:step-gain},  $$\E[\Delta(a_{i+1}|A_i) \mid A_i] \geq \frac{1-\epsilon}{k} \sum_{a \in A^* \setminus A_i} \Delta(a|A_i).\vspace{-4mm}$$
By submodularity, 
$$\sum_{a \in A^* \setminus A_i} \Delta(a|A_i) \geq \Delta(A^*|A_i) \geq f(A^*) - f(A_i).$$ Therefore,
\begin{align*}
\E[f(A_{i+1}) - f(A_i) \mid A_i] &= \E[\Delta(a_{i+1}|A_i) \mid A_i] \\
&\geq \frac{1-\epsilon}{k} (f(A^*) - f(A_i)).
\end{align*}
By taking expectation over $A_i$,
$$ \E[f(A_{i+1}) - f(A_i)] \geq \frac{1-\epsilon}{k} \E[f(A^*) - f(A_i)].$$
By induction, this implies that
\begin{align*}
\E[f(A_k)] &\geq \left( 1 - \left(1 - \frac{1-\epsilon}{k} \right)^k \right) f(A^*) \\
 &\geq \left(1 - e^{-(1-\epsilon)}\right) f(A^*) 
 \geq (1 - 1/e - \epsilon) f(A^*).
\end{align*}

\paragraph{Acknowledgment.} This research was supported by SNF 200021-137971, ERC StG 307036, a Microsoft Faculty Fellowship, and an ETH Fellowship.

\clearpage
\bibliographystyle{aaai}
\bibliography{Mirzasoleiman}

\end{document}